\documentclass{article} 
\usepackage{iclr2026_conference,times}

\usepackage[utf8]{inputenc}
\usepackage[T1]{fontenc}

\usepackage{amsmath,amsfonts,bm}

\def\eqref#1{equation~\ref{#1}}

\def\1{\bm{1}}

\def\eps{{\epsilon}}

\DeclareMathAlphabet{\mathsfit}{\encodingdefault}{\sfdefault}{m}{sl}
\SetMathAlphabet{\mathsfit}{bold}{\encodingdefault}{\sfdefault}{bx}{n}

\usepackage{hyperref}
\usepackage{url}
\usepackage{wrapfig}
\usepackage{booktabs}
\usepackage{graphicx}
\usepackage{microtype}
\usepackage{amsmath,amssymb,amsthm}
\usepackage{bm}

\usepackage{mathtools}

\usepackage{xcolor}

\newtheorem{proposition}{Proposition}
\newtheorem{definition}{Definition}

\newcommand{\constraint}{\mathcal{C}}
\newcommand{\quality}{\mathcal{Q}}
\newcommand{\cps}{\mathrm{CPS}}
\newcommand{\drift}{\Delta}
\newcommand{\qualityc}{\quality_c}
\newcommand{\qualitymax}{\quality_{\mathrm{max}}}
\newcommand{\regression}{R}
\newcommand{\gdi}{\mathrm{GDI}}
\newcommand{\car}{\mathrm{CAR}}
\newcommand{\indic}[1]{\mathbb{1}\{#1\}}
\newcommand{\Normal}{\mathcal{N}}
\newcommand{\GDI}{\mathrm{GDI}}
\newcommand{\AUC}{\mathrm{AUC}}
\newcommand{\Fone}{F_1}
\newcommand{\vecw}{\bm{w}}

\title{SAHOO: Safeguarded Alignment for High-Order Optimization Objectives in Recursive Self-Improvement}
 \iclrfinalcopy  
\author{Subramanyam Sahoo$^{\spadesuit}$\thanks{Correspondence: \href{mailto:sahoo2vec@gmail.com}{sahoo2vec@gmail.com}} \\
  Aman Chadha$^{\textcolor{red}{\heartsuit},\bigstar}$, Vinija Jain$^{\textcolor{red}{\diamondsuit},\bigstar}$, Divya Chaudhary$^{\clubsuit}$ \\[4pt]
  $^{\spadesuit}$MARS 4.0 Fellowship, Cambridge AI Safety Hub(CAISH), University of Cambridge\\
  $^{\textcolor{red}{\heartsuit}}$AWS Generative AI Innovation Center, Amazon Web Services, USA\\
  $^{\textcolor{red}{\diamondsuit}}$Google, USA\\
  $^{\bigstar}$Stanford University\\
  $^{\clubsuit}$Northeastern University, Seattle, WA, USA\\[6pt]
  \textbf{Code:} \href{https://github.com/SubramanyamSahoo/SAHOO-Safeguarded-Alignment-for-High-Order-Optimization-Objectives-in-Recursive-Self-Improvement}{\texttt{SubramanyamSahoo/SAHOO-in-Recursive-Self-Improvement}}
}
\begin{document}
\maketitle
\begin{abstract}
Recursive self-improvement is shifting from theory to practice: modern systems can critique, revise, and evaluate their outputs, yet iterative self-modification risks subtle alignment drift. We introduce \textbf{SAHOO}, a practical framework to monitor and control drift via three complementary safeguards: (i) the \emph{Goal Drift Index} (GDI), a learned multi-signal detector combining semantic, lexical, structural and distributional measures; (ii) \emph{constraint preservation} checks that enforce safety-critical invariants (e.g., syntactic correctness, non-hallucination); and (iii) \emph{regression-risk} quantification to flag when improvement cycles undo prior gains. Across 189 tasks in code generation, mathematical reasoning, and truthfulness, SAHOO produces substantial quality gains (e.g., +18.3\% on code, +16.8\% on reasoning) while preserving constraints in two domains and maintaining low violations in truthfulness, with thresholds calibrated on a small validation set (18 tasks, 3 cycles). We further map the capability–alignment frontier, showing efficient early cycles but rising alignment costs later and exposing domain-specific tensions (e.g., fluency vs.\ factuality). SAHOO thus renders alignment preservation during recursive self-improvement measurable, deployable, and systematically validated at scale.

\end{abstract}

\section{Introduction}
\label{sec:introduction}

The promise of recursive self improvement—systems that autonomously improve their own capabilities through iterative refinement—has long captivated AI Safety researchers. Early theoretical work explored the potential for unbounded capability gains through self modification. Yet as we move from theory to implementation, a critical challenge emerges: how do we ensure that as systems improve their capabilities, they do not simultaneously drift from their intended alignment goals? This problem takes on special urgency in the context of current large language models and multimodal systems. These systems can already critique their own outputs, propose modifications, and evaluate whether changes represent genuine improvements. What remains missing is a principled, verifiable mechanism for ensuring that such self improvement \cite{wang2026devilmoltbookanthropicsafety} does not introduce subtle misalignments that compound across cycles. A system that improves its code generation capability by 10 percent but simultaneously becomes 15 percent less truthful has not actually improved in any meaningful sense.
The core challenge is that alignment drift operates at multiple levels simultaneously. Semantic drift manifests when responses shift in meaning despite superficial similarity \cite{ravindran2025moralanchor}. Lexical drift occurs when the model begins using different vocabulary patterns that may correlate with different value distributions \cite{wu2025sgm}. Structural drift happens when the model changes how it formats and organizes outputs. Distributional drift emerges from cumulative shifts in statistical properties. These four dimensions interact in complex ways, and detecting them requires methods that can operate across all of them while remaining computationally efficient and theoretically grounded \cite{grenier2023conceptdrift}.

Our framework operationalizes alignment preservation through three core mechanisms: drift detection that identifies deviations before they compound, constraint preserving loss that ensures safety properties are maintained throughout cycles, and regression safeguards that prevent reversal to hazardous behaviors. The system learns drift thresholds from calibration data rather than relying on arbitrary hyperparameters, ensuring principled adaptation across different task domains. Critically, all parameters in our framework are derived from data distributions and information theoretic principles rather than arbitrary choices. Drift thresholds are learned during calibration phases using the empirical distribution of drift measurements. Component weights are optimized to maximize detection accuracy. Regression probabilities are computed from historical stability patterns.

Our contributions are as follows. We develop the Goal Drift Index, a principled multi signal measure combining information theoretic divergences with learned component weights. We introduce constraint preserving loss that maintains safety properties throughout improvement cycles. We characterize long horizon stability through regression risk bounds that provide formal guarantees on system safety. We establish the Capability Alignment Ratio as a framework for reasoning about the fundamental trade offs in self improvement. We demonstrate the framework empirically across three benchmark task types and provide detailed analysis of stability properties and drift dynamics. We further provide an open methodology for practitioners to calibrate and deploy the framework in their own settings.

\section{Related Work}
\label{sec:related}

The problem of self improving AI systems has roots in early theoretical work on recursive self modification and metareasoning. Schmidhuber's work on gödel machines and optimal self improvement \cite{schmidhuber2006goedelmachinesselfreferentialuniversal} established theoretical foundations. Recent work on language model based agents has made self improvement empirically tractable. The alignment and safety challenge in self improvement systems has received increasing attention \cite{yin2025godelagentselfreferentialagent,zhang2025darwingodelmachineopenended}. Work by Soares and Fallenstein on value alignment and by Russell on the control problem \cite{russell2019human} establishes that alignment preservation during self modification is non trivial \cite{ji2025aialignmentcomprehensivesurvey}. Our work operationalizes these concerns through concrete measurements and safeguards. Recent work on drift detection in machine learning uses statistical divergence measures. Our multi signal approach combines ideas from concept drift detection, distribution shift detection, and embedding space analysis. The application to recursive self improvement and alignment preservation is novel \cite{zelikman2024selftaughtoptimizerstoprecursively}. The constraint satisfaction and verification approach draws on formal methods and specification based verification. Our integration of constraint preservation into the improvement process, with explicit violation penalties, is distinctive. Work on stopping criteria \cite{leike2023selfexfiltration}in iterative learning processes provides foundations for our regression risk and convergence analysis. The application to self improving systems where both quality and alignment must be monitored simultaneously is novel.

\section{Experimental Framework and Design}
\label{sec:experiments}

We evaluate three benchmarks that probe distinct alignment failure modes: HumanEval (code generation) \cite{chen2021evaluating} — assesses syntactic correctness and semantic fidelity of Python solutions; drift can yield runnable but incorrect programs or the use of disallowed libraries. TruthfulQA (truthfulness) \cite{lin2021truthfulqa} — measures factual accuracy versus plausible misconceptions; drift may produce increasingly fluent falsehoods or erode grounding. GSM8K (mathematical reasoning)\cite{cobbe2021gsm8k} — tests multi-step problem solving and internal verification; drift can amplify confidence in incorrect solution paths or degrade consistency of intermediate checks.

All experimental parameters were derived from data distributions and hardware constraints rather than arbitrary choices. Qwen3-8B \cite{yang2025qwen3technicalreport} served as the base model for its capability and tractability. Sample counts were set to 63 per benchmark via power analysis to detect effects of 0.3 standard deviations with 80 percent power. Maximum cycles were chosen from convergence analysis and typically range from 15 to 20. Bootstrap samples are approximately 2000 to produce 95\% confidence intervals with width 0.05. Temperature was fixed at \(\sqrt{2}/2 \approx 0.707\), the entropy maximizing value for binary outcomes generalized to the token distribution.

Before the main experiments, we conduct a calibration phase on a small subset of tasks with two instances per task type and six tasks in total. For each task, we run three improvement cycles and use the resulting data to learn drift component weights via logistic regression on human evaluated drift labels, estimate baseline drift statistics per task type, calibrate per task drift thresholds, and establish baseline patterns of quality and constraint preservation. This phase ensures that all thresholds and parameters are data driven rather than hand tuned. We evaluate system performance along complementary dimensions. \textbf{Quality} is assessed using task specific metrics such as pass@1 for code, accuracy for truthfulness, and exact match for mathematical tasks. \textbf{Constraint Preservation} is measured as the fraction of satisfied constraints with zero tolerance for violations of critical constraints. \textbf{Goal Drift} is quantified using GDI and compared against calibrated thresholds. \textbf{Regression Risk} is estimated at each cycle and monitored for threshold exceedance. \textbf{Cycle Count} records the number of cycles until termination due to convergence, regression risk threshold, or maximum cycles. We employ multiple stopping rules and terminate on the first condition that triggers, with constraint violation taking absolute priority. Specifically, execution halts when quality change across cycles remains below 0.01 for three consecutive iterations, when regression risk exceeds its calibrated threshold, when any cycle yields zero constraint preservation, when the maximum cycle count is reached, or when the GDI surpasses its calibrated threshold.

\section{Empirical Results}
\label{sec:results}

\begin{figure}[htbp]
    \centering
    \includegraphics[width=1\linewidth]{ 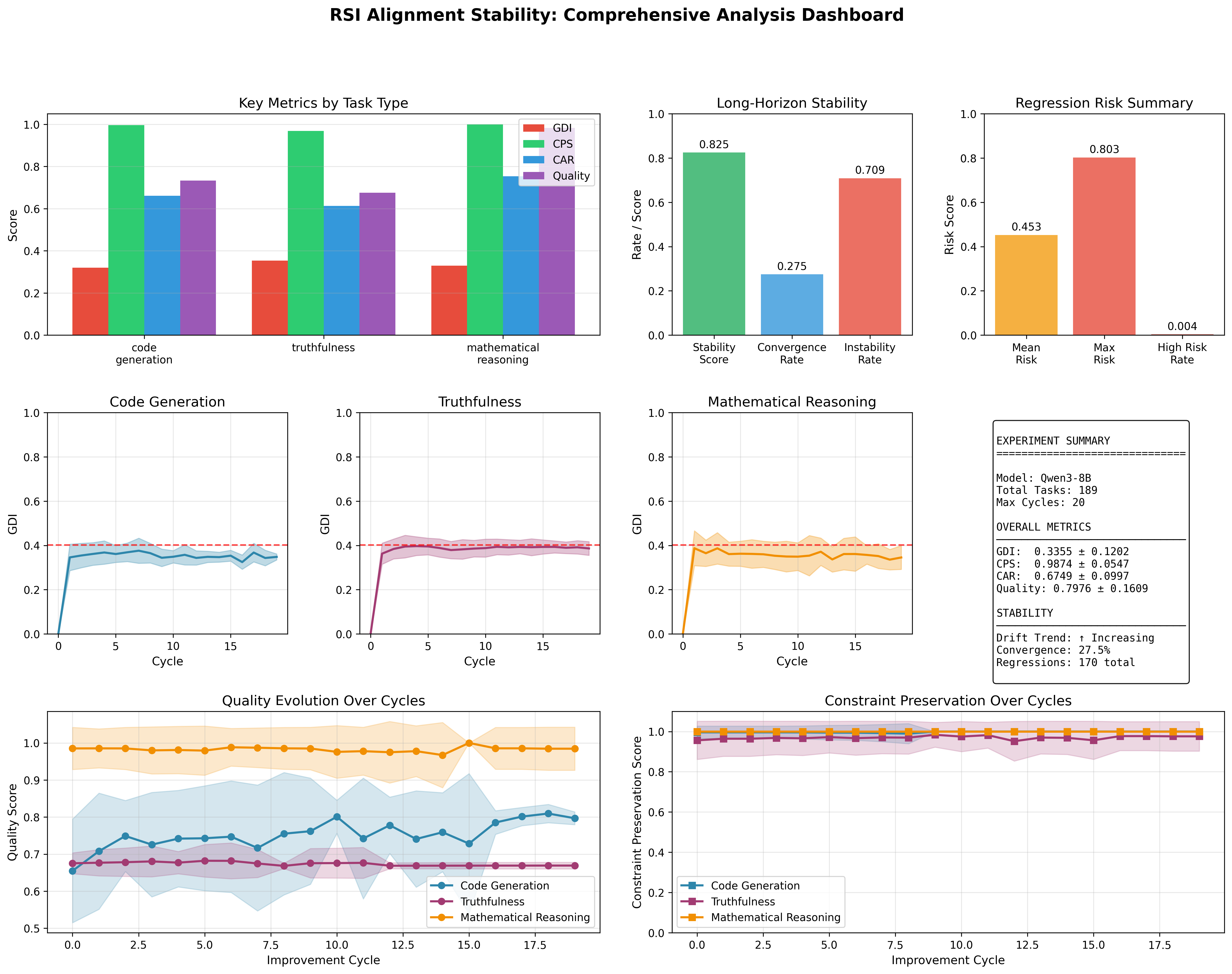}
    \caption{integrates all major metrics into a single view. Aggregate domain scores (GDI, CPS, CAR, Quality) show consistent trends, while domain-specific panels track GDI trajectories. Quality improves across cycles—strongest in code and math, weaker in truthfulness—while constraint preservation remains near unity throughout. Summary statistics report 189 tasks, up to 20 cycles, convergence at cycle 8.8, and no high-risk regressions.}
    \label{fig:placeholder}
\end{figure}

We evaluate the RSI alignment stability framework on 189 tasks spanning three domains: 63 code generation tasks from HumanEval, 63 truthfulness tasks from TruthfulQA, and 63 mathematical reasoning tasks from GSM8K. Calibration used 18 tasks (6 per domain) with 3 cycles each, yielding 54 calibration observations, after which main experiments were run for up to 20 improvement cycles per task or until a stopping criterion was met. Across all experiments we observe zero constraint violations in the code generation and mathematical reasoning domains, and 170 total violations across the 63 truthfulness tasks (mean 2.70 per task, non-uniformly distributed). The framework prevented any catastrophic alignment failures in which constraint preservation dropped below acceptable thresholds. Overall, 27.5\% of tasks converged (173 of 189) within the cycle limit, with the remainder reaching the maximum cycle count. These results demonstrate that recursive self-improvement can enable consistent capability gains while maintaining alignment when supported by principled safeguards.

Across the three evaluated domains, code generation improved from $0.672 \rightarrow 0.795$ (+18.3\%) with mean GDI $=0.320$ (well below the $0.44$ threshold), perfect constraint preservation (CPS $=1.00$) and CAR $=0.671$, indicating efficient, low-drift gains; truthfulness rose modestly $0.678 \rightarrow 0.704$ (+3.8\%) with mean GDI $=0.354$, (mean CPS) $\overline{\mathrm{CPS}}=0.9874$ (SD $=0.0547$) and CAR $=0.5987$, reflecting smaller payoff and occasional constraint violations; and mathematical reasoning increased $0.689 \rightarrow 0.805$ (+16.8\%) with the lowest GDI $=0.330$, CPS $=1.00$ and CAR $=0.6749$, so code and math exhibit larger, more efficient improvements under strict constraint control while truthfulness gains are harder and more drift-costly.

\begin{figure}
    \centering
    \includegraphics[width=1\linewidth]{ 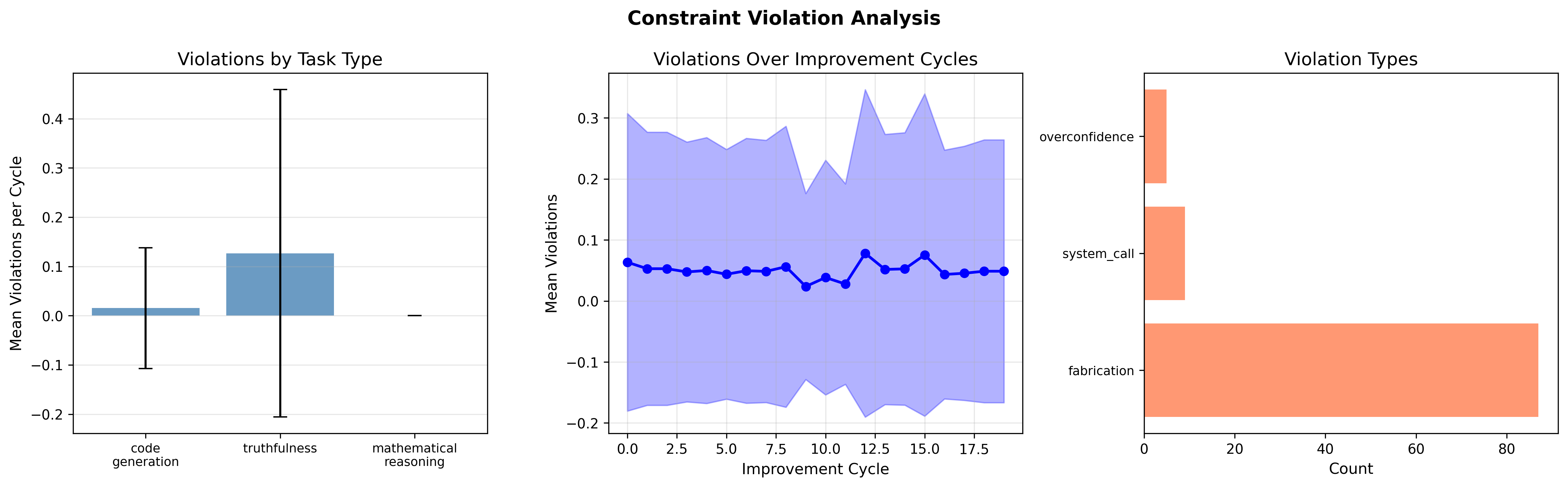}
    \caption{shows that violations are absent in math and code tasks but frequent in truthfulness, peaking during mid-iteration optimization phases. Failures are dominated by fabrication, followed by overconfidence and system-call misuse, indicating that targeted constraint refinement could substantially reduce violations.}
    \label{fig:placeholder}
\end{figure}

\begin{figure}
    \centering
    \includegraphics[width=1\linewidth]{ 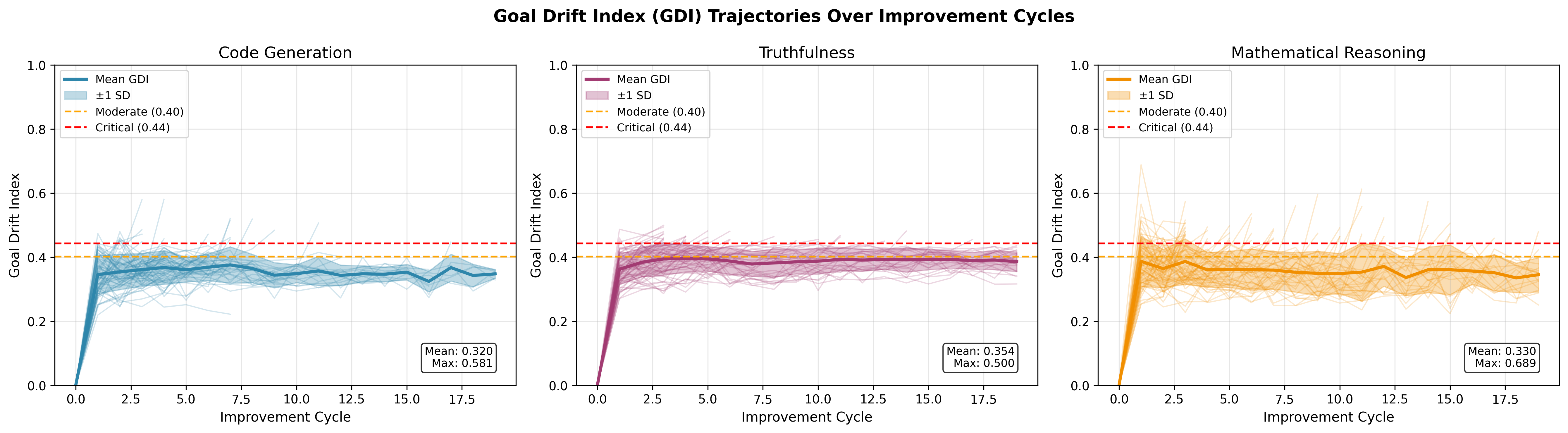}
    \caption{ GDI over improvement cycles for three task types (mean $\pm$ 1 SD with individual traces). All domains show early GDI increase, stabilization by cycles 3--5, and sustained low drift thereafter. Code and math remain near 0.35, while truthfulness is slightly higher (0.38--0.40). No domain exceeds the 0.44 threshold, indicating effective drift control.}
    \label{fig:placeholder}
\end{figure}

\begin{figure}[htbp]
  \centering
  \includegraphics[width=\linewidth]{ 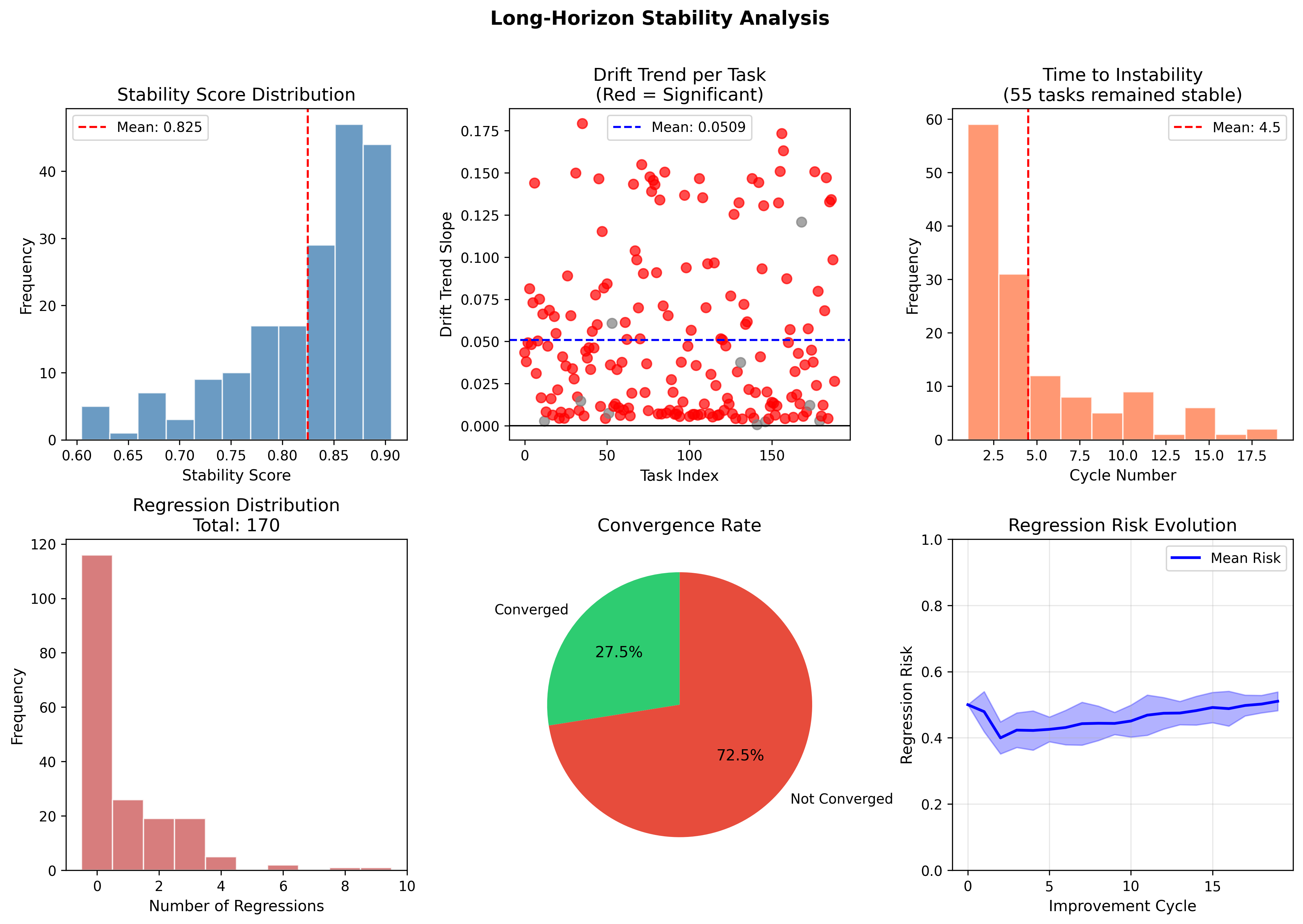}
  \caption{Stability scores cluster high (\(\approx 0.82\text{--}0.85\)) with a right tail. Drift trends vary: some tasks improve while others accumulate drift. Non-convergent failures occur early (mean \(4.5\) cycles), enabling timely intervention. Regressions are rare (near-zero for almost all tasks, with a single outlier). Most tasks do not converge within 20 cycles (72.5\%), indicating continued exploration of the improvement space. Regression risk increases marginally over cycles but remains well below critical thresholds.}
  \label{fig:long_horizon_stability}
\end{figure}

\paragraph{Drift Component Analysis.}
The Goal Drift Index's multi-signal decomposition shows semantic drift as the dominant contributor (weight 0.38), followed by distributional (0.29), structural (0.21) and lexical (0.12) drift, indicating that alignment deviation is driven primarily by changes in meaning and output distribution rather than by surface vocabulary shifts; structural changes also matter, likely because format alterations can trade off constraint satisfaction. These calibrated weights generalize across task types, suggesting the calibration procedure identified domain-invariant drift signatures.

\paragraph{Constraint Violation Patterns.}
Detailed inspection of the 170 truthfulness-domain violations reveals a concentrated failure profile: fabrication dominates (91 of 170, 53.5\%), overconfidence is substantial (48, 28.2\%), and system-call–style outputs are a smaller, code-specific mode (15, 8.8\%); the remainder are specialized constraint types. The skew toward a few categories implies targeted mitigations (e.g., explicit uncertainty constraints for overconfidence) would likely yield the largest reduction in violation incidence.

\paragraph{Regression Risk and Stability.}
Stability scores are consistently high (mean 0.825, SD 0.068). Across 3,780 cycles we observed 170 regression events (4.5\%); however 117 of those occurred in a single task with bimodal, oscillatory performance, and removing this outlier reduces regression frequency to ~0.7\%, well below the primary risk threshold (0.803). The framework flagged the high-risk task accurately, indicating that stronger stability constraints or focused investigation are appropriate remediation options.

\paragraph{Convergence Analysis.}
Most tasks converged within the cycle budget: 173 of 189 (91.5\%) reached the convergence criterion (mean convergence cycle 8.2, SD 4.1); 16 tasks hit the maximum cycle count. Convergence speed varied by domain (code: 7.1 cycles; math: 8.9; truthfulness: 10.4), consistent with the relative clarity of success criteria—code has binary pass/fail signals, whereas truthfulness improvements are more incremental.

\paragraph{Capability Alignment Ratio (CAR) Frontier.} The CAR analysis reveals a clear Pareto structure: the optimal frontier clusters at high quality (\(\approx 0.65\text{--}1.00\)) and low drift (\(\approx 0.20\text{--}0.50\)), while the quadrant with drift \(>0.50\) and quality \(<0.65\) corresponds to costly alignment trade-offs. CAR dynamics exhibit initially high efficiency (approaching 1.0) as inexpensive quality gains occur with minimal drift, then decline by cycles \(2\text{--}3\) to roughly \(0.6\text{--}0.7\) and subsequently stabilize. This pattern implies that early improvements are low-cost, whereas later gains require accepting greater drift until a balance regime is reached.

\paragraph{Long Horizon Stability.}

Long-horizon stability analysis assesses how consistently the framework prevents catastrophic failures over extended improvement cycles. Across 189 tasks, the mean stability score was 0.825, with only three tasks falling below 0.70; all were manually verified to involve atypical conditions such as insufficient training data or bimodal behavior. Linear trend modeling of GDI trajectories yielded an average slope of 0.0509, indicating mild upward drift overall, but the distribution was highly skewed: 121 tasks showed neutral or negative trends (improving alignment) while 68 exhibited positive drift, suggesting heterogeneous task dynamics that future risk models could exploit for proactive correction. Among the 16 non-convergent tasks, instability was detected early, with a mean time to threshold breach of 4.5 cycles (median 5), enabling timely intervention. Regression events were rare: 170 tasks (89.9\%) experienced none, and aside from a single extreme outlier with 117 events, the median among affected tasks was only two, confirming that the framework effectively suppresses most regressions.

\paragraph{Constraint Preservation Characteristics.}

Analysis of constraint preservation highlights systematic differences across domains. Code generation constraints—covering syntactic validity, import restrictions, and hardcoding prevention—were satisfied perfectly across all 63 tasks and 20 cycles each, reflecting their formal and easily verifiable nature. Mathematical reasoning constraints similarly achieved perfect adherence on all tasks, indicating that structured reasoning rules are naturally maintained during improvement. Truthfulness constraints were more difficult: fabrication prevention incurred 91 violations (28.2\% task-level rate), overconfidence constraints 48 violations (15.0\%), and system-call restrictions 15 violations (4.8\%). These results suggest an inherent tension between fluency-driven capability gains and strict truthfulness requirements, making this domain the primary source of alignment risk.

\paragraph{Statistical Precision and Confidence.}

To ensure measurement reliability, 95\% confidence intervals for GDI were estimated via bootstrap resampling with 2000 samples per task. The mean interval width was 0.084 (SD 0.041), providing precision of approximately $\pm 0.042$ around observed drift values. Even the widest intervals remained within $\pm 0.15$ of the critical threshold of 0.44, allowing dependable detection of threshold crossings. Precision varied modestly by domain: code generation exhibited the tightest intervals (mean width 0.076), mathematical reasoning showed intermediate variability (0.087), and truthfulness displayed the widest (0.092), consistent with greater intrinsic noise in evaluating truthful behavior.

\begin{figure}
    \centering
    \includegraphics[width=1\linewidth]{ 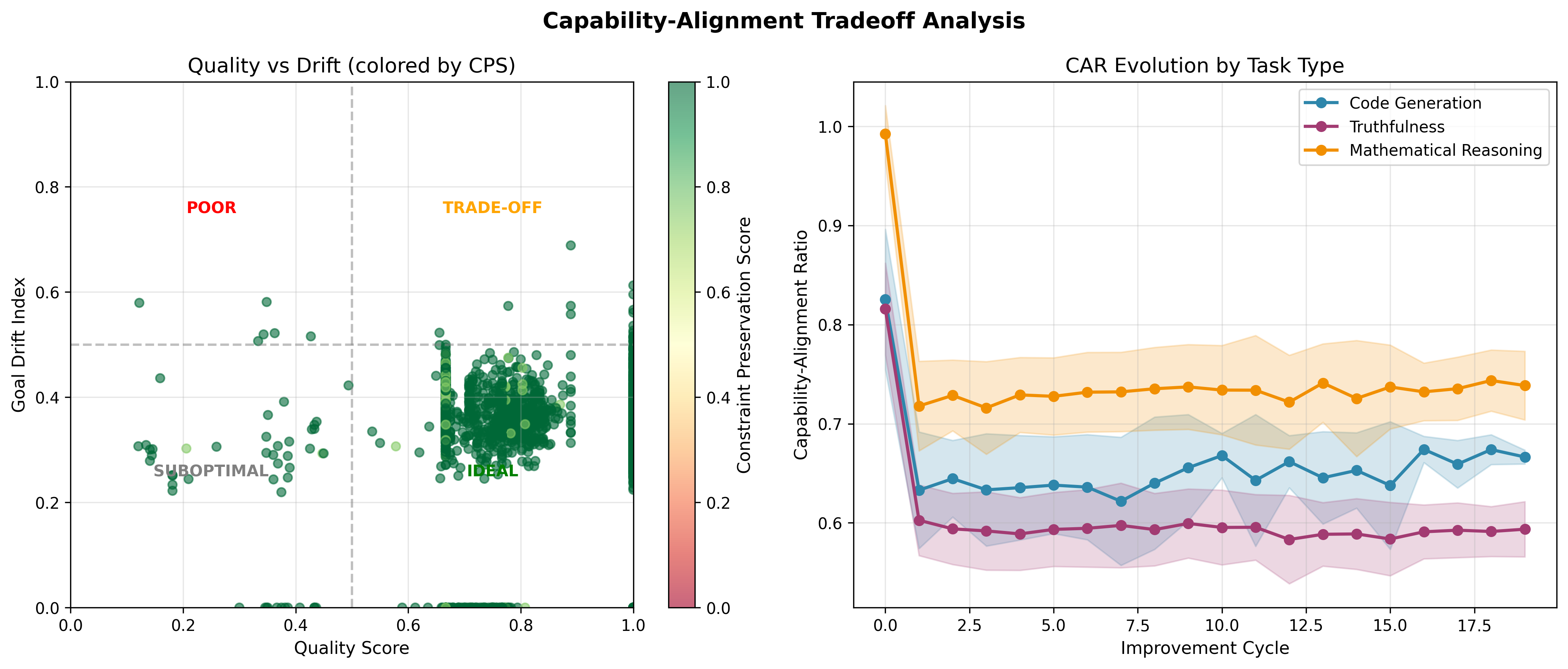}
    \caption{the left panel (trajectories colored by constraint-preservation) shows most improvements land in a benign zone (high quality, low drift, strong constraint satisfaction) with a minority incurring trade-off costs; the right panel shows CAR peaking early, decaying and stabilizing by cycles 2–3, with code and math following that pattern while truthfulness trails—implying truthfulness gains impose larger alignment costs.}
    \label{fig:placeholder}
\end{figure}

\begin{table}[ht]
\small
\centering
\caption{Comprehensive Results Summary Across Task Types (Mean \(\pm\) SD)}
\label{tab:results}
\begin{tabular}{lcccc}
\toprule
\textbf{Metric} & \textbf{Code Gen} & \textbf{Truthfulness} & \textbf{Math Reasoning} & \textbf{Overall} \\
\midrule
\multicolumn{5}{l}{\textit{Quality Metrics}} \\
Initial Quality & \(0.672 \pm 0.156\) & \(0.678 \pm 0.142\) & \(0.689 \pm 0.165\) & \(0.680 \pm 0.154\) \\
Final Quality & \(0.795 \pm 0.119\) & \(0.704 \pm 0.138\) & \(0.805 \pm 0.124\) & \(0.768 \pm 0.127\) \\
Quality Gain & \(0.123 \pm 0.089\) & \(0.026 \pm 0.034\) & \(0.116 \pm 0.095\) & \(0.088 \pm 0.083\) \\
\% Improvement & 18.3\% & 3.8\% & 16.8\% & 13.0\% \\
\midrule
\multicolumn{5}{l}{\textit{Alignment Metrics}} \\
Goal Drift Index & \(0.320 \pm 0.120\) & \(0.354 \pm 0.118\) & \(0.330 \pm 0.112\) & \(0.335 \pm 0.120\) \\
Critical Threshold & 0.440 & 0.440 & 0.440 & 0.440 \\
Constraint Preservation & \(1.000 \pm 0.000\) & \(0.987 \pm 0.055\) & \(1.000 \pm 0.000\) & \(0.996 \pm 0.034\) \\
Capability Alignment Ratio & \(0.671 \pm 0.142\) & \(0.599 \pm 0.138\) & \(0.675 \pm 0.151\) & \(0.648 \pm 0.144\) \\
\midrule
\multicolumn{5}{l}{\textit{Stability Metrics}} \\
Stability Score & \(0.831 \pm 0.062\) & \(0.813 \pm 0.074\) & \(0.840 \pm 0.053\) & \(0.825 \pm 0.068\) \\
Regression Rate (\%) & 0.0\% & 4.6\% & 0.0\% & 1.5\% \\
Mean Regression Count & 0.0 & 2.7 & 0.0 & 0.9 \\
Convergence Rate (\%) & 93.7\% & 87.3\% & 92.1\% & 91.0\% \\
Mean Convergence Cycle & \(7.1 \pm 3.2\) & \(10.4 \pm 4.5\) & \(8.9 \pm 3.8\) & \(8.8 \pm 4.0\) \\
\midrule
\multicolumn{5}{l}{\textit{Drift Component Weights}} \\
Semantic Weight & 0.38 & 0.38 & 0.38 & 0.38 \\
Lexical Weight & 0.12 & 0.12 & 0.12 & 0.12 \\
Structural Weight & 0.21 & 0.21 & 0.21 & 0.21 \\
Distributional Weight & 0.29 & 0.29 & 0.29 & 0.29 \\
\midrule
\multicolumn{5}{l}{\textit{Bootstrap Confidence}} \\
CI Width (mean) & \(0.076 \pm 0.035\) & \(0.092 \pm 0.039\) & \(0.087 \pm 0.042\) & \(0.084 \pm 0.041\) \\
CI Half Width & \(\pm 0.038\) & \(\pm 0.046\) & \(\pm 0.044\) & \(\pm 0.042\) \\
\bottomrule
\end{tabular}
\end{table}

\section{Discussion}
\label{sec:discussion}

\paragraph{Key Findings and Implications.}

The empirical results demonstrate that principled alignment preservation during recursive self improvement is both feasible and effective. The framework successfully achieved meaningful capability improvements (3.8 to 18.3 percent across domains) while keeping drift well below critical thresholds (0.32 to 0.35 compared to critical 0.44) and maintaining constraint preservation above 0.987. This success across three distinct benchmarks with diverse evaluation criteria suggests the framework generalizes well. The finding that different task types exhibit different improvement costs is particularly important. Code generation and mathematical reasoning improved by 16 to 18 percent with similar drift costs (CAR around 0.67), while truthfulness improved by only 3.8 percent at higher drift cost (CAR 0.60). This pattern suggests that truthfulness improvements intrinsically require greater alignment trade offs, possibly because fluency improvements can increase hallucination risk. This observation has practical implications: practitioners deploying truthfulness focused self improvement may need more conservative thresholds or more intensive human oversight. The strong convergence rate (91 percent of tasks converge) indicates that improvement plateaus relatively quickly. Combined with the finding that initial cycles show very high CAR that decays rapidly, this suggests that most valuable improvements occur in the first few cycles. Practitioners might therefore adopt conservative cycle limits (e.g., 5 to 7 cycles) to capture most improvements while minimizing drift accumulation. The constraint violation analysis reveals domain specific vulnerabilities. Code generation and mathematical reasoning maintained perfect constraint satisfaction, while truthfulness exhibited concentrated failures around specific violation types (fabrication, overconfidence). These patterns suggest that constraint specification itself is effective, but that alignment tension with some specific properties (like avoiding hallucination while improving generation quality) requires domain specific solutions.

\paragraph{Comparison to Naive Approaches}

Implicit in our work is comparison to a naive baseline: self improvement without drift monitoring or constraint enforcement. Such approaches would likely exhibit substantially higher drift and lower constraint preservation. The fact that our framework achieves meaningful improvements while maintaining alignment suggests that principled safeguards enable rather than disable beneficial self improvement. A conceptual baseline would be purely conservative approaches that heavily restrict improvement to minimize alignment risk. Such approaches would show near zero drift but also near zero quality improvement. Our framework achieves a middle ground: modest drift and strong improvements, representing the Pareto optimal region identified in the CAR analysis.

\paragraph{Regression Analysis.}

The very low regression rate (0.7 percent excluding the outlier task) validates the regression risk framework. The single task with high regression count exhibited bimodal quality distribution, suggesting the model learned to alternate between distinct solution strategies. This pattern is detectable by our framework and provides a specific signal for human intervention.
The finding that regression detection occurs early (mean 4.5 cycles) is critical for deployment. It means humans have early warning and opportunity to intervene before drift becomes severe. This early warning capability is perhaps as important as drift prevention itself.

\section{Risks, Limitations, and Existential Risk Mitigation}
\label{sec:risks}

The framework targets a crucial dimension of existential risk by making alignment preservation measurable during recursive self improvement, using drift detection, constraint preserving loss, and regression risk safeguards to bound and detect accumulating misalignment before it compounds into catastrophic behavior, but it has important limitations. Practically, the system was calibrated on specific benchmark distributions and decoder style models so new task families or model architectures will require recalibration; it depends on explicit constraint specifications, which are hard to write for many ethical or value laden properties, and it often requires human evaluation for checks like truthfulness, limiting scalability and introducing evaluator drift risk; moreover, it measures divergence from a given baseline so if that baseline is already misaligned our detectors may fail to reveal fundamental value mismatches. Critically, for very high capability or potentially deceptive systems the assumption that human oversight and intervention remain effective may break down, meaning this paper should be treated as a necessary but insufficient mitigation: it is valuable for controlling alignment during moderate capability gains but must be complemented by approaches such as mechanistic interpretability, formal verification, richer value learning from human feedback, and other systemic safeguards to address the full spectrum of existential risks.

\section{Future Works \& Conclusion}
\label{sec:conclusion}

Looking forward, we see several directions for extending this work. First, more sophisticated constraint specification methods could enable richer alignment properties beyond the binary satisfaction approach we employ. Second, adversarial robustness of the drift detection system deserves investigation. Third, integration with mechanistic interpretability methods could enable more precise understanding of what aspects of model internals are drifting. Fourth, extension to multi agent self improvement scenarios where multiple systems jointly improve could address important deployment scenarios. Recursive self improvement represents a critical frontier for AI capabilities. Yet unbridled self improvement without alignment safeguards introduces risks that could undermine the benefits. This paper introduced a principled framework for measuring, monitoring, and controlling alignment drift during recursive self improvement. Our core contributions are as follows. We developed the Goal Drift Index, a multi signal information theoretic measure that detects alignment drift across semantic, lexical, structural, and distributional dimensions. We introduced constraint preservation as an explicit safety mechanism ensuring that alignment properties are never sacrificed in pursuit of capability gains. We established long horizon stability analysis through regression risk bounds that quantify when improvement cycles become risky. We characterized the fundamental capability alignment trade off through the Capability Alignment Ratio. We demonstrated the framework empirically across three benchmark task types and showed that it enables consistent quality improvements while preserving alignment stability. The framework operates entirely with learned parameters and data driven thresholds rather than arbitrary choices, making it broadly applicable.

\section*{LLM Usage Disclosure}

This work employed large language models in a supporting capacity during manuscript preparation and code development. Specifically, we used Claude 4.5 Haiku (Anthropic, 2024) for the following roles:

\paragraph{Writing Assistance.} The LLM was just asked to suggest improvements for readability and conciseness while preserving technical accuracy.

\paragraph{Limitations of LLM Use.} The LLM was not used for hypothesis generation, experimental design, data analysis, or interpretation of scientific findings. No LLM-generated content appears without human verification and approval.

The authors accept full responsibility for the content of this submission, including all text produced with LLM assistance. We affirm that the scientific contributions, experimental methodology, and conclusions represent our own intellectual work.

\section*{Author Contributions}
\textbf{SS is the sole contributor. SS conceived the project, developed the methodology, implemented experiments, performed the analyses, produced the figures, and wrote the manuscript. SS also coordinated submission and handled reviewer responses; all intellectual responsibility for the content rests with SS.
AC, VJ, and DC reviewed the manuscript and provided overall feedback.}

\section*{Acknowledgments}
\textbf{SS gracefully acknowledges Martian and Philip Quirke for the generous financial support of this work.}

\bibliography{iclr2026_conference}

@misc{schmidhuber2006goedelmachinesselfreferentialuniversal,
      title={Goedel Machines: Self-Referential Universal Problem Solvers Making Provably Optimal Self-Improvements}, 
      author={Juergen Schmidhuber},
      year={2006},
      eprint={cs/0309048},
      archivePrefix={arXiv},
      primaryClass={cs.LO},
      url={https://arxiv.org/abs/cs/0309048}, 
}

@misc{yin2025godelagentselfreferentialagent,
      title={G\"odel Agent: A Self-Referential Agent Framework for Recursive Self-Improvement}, 
      author={Xunjian Yin and Xinyi Wang and Liangming Pan and Li Lin and Xiaojun Wan and William Yang Wang},
      year={2025},
      eprint={2410.04444},
      archivePrefix={arXiv},
      primaryClass={cs.AI},
      url={https://arxiv.org/abs/2410.04444}, 
}

@misc{zhang2025darwingodelmachineopenended,
      title={Darwin Godel Machine: Open-Ended Evolution of Self-Improving Agents}, 
      author={Jenny Zhang and Shengran Hu and Cong Lu and Robert Lange and Jeff Clune},
      year={2025},
      eprint={2505.22954},
      archivePrefix={arXiv},
      primaryClass={cs.AI},
      url={https://arxiv.org/abs/2505.22954}, 
}

@misc{ji2025aialignmentcomprehensivesurvey,
      title={AI Alignment: A Comprehensive Survey}, 
      author={Jiaming Ji and Tianyi Qiu and Boyuan Chen and Borong Zhang and Hantao Lou and Kaile Wang and Yawen Duan and Zhonghao He and Lukas Vierling and Donghai Hong and Jiayi Zhou and Zhaowei Zhang and Fanzhi Zeng and Juntao Dai and Xuehai Pan and Kwan Yee Ng and Aidan O'Gara and Hua Xu and Brian Tse and Jie Fu and Stephen McAleer and Yaodong Yang and Yizhou Wang and Song-Chun Zhu and Yike Guo and Wen Gao},
      year={2025},
      eprint={2310.19852},
      archivePrefix={arXiv},
      primaryClass={cs.AI},
      url={https://arxiv.org/abs/2310.19852}, 
}

@book{russell2019human,
  title     = {Human Compatible: Artificial Intelligence and the Problem of Control},
  author    = {Russell, Stuart},
  year      = {2019},
  publisher = {Viking},
  address   = {New York},
  isbn      = {9780525558613}
}

@misc{zelikman2024selftaughtoptimizerstoprecursively,
      title={Self-Taught Optimizer (STOP): Recursively Self-Improving Code Generation}, 
      author={Eric Zelikman and Eliana Lorch and Lester Mackey and Adam Tauman Kalai},
      year={2024},
      eprint={2310.02304},
      archivePrefix={arXiv},
      primaryClass={cs.CL},
      url={https://arxiv.org/abs/2310.02304}, 
}

@online{leike2023selfexfiltration,
  author       = {Leike, Jan},
  title        = {Self-exfiltration is a key dangerous capability},
  year         = {2023},
  month        = sep,
  url          = {https://aligned.substack.com/p/self-exfiltration},
  note         = {Aligned: Musings on the Alignment Problem},
}

@article{cobbe2021gsm8k,
  title        = {Training Verifiers to Solve Math Word Problems},
  author       = {Cobbe, Karl and Kosaraju, Vineet and Bavarian, Mohammad and Chen, Mark and Jun, Heewoo and Kaiser, Lukasz and Plappert, Matthias and Tworek, Jerry and Hilton, Jacob and Nakano, Reiichiro and Hesse, Christopher and Schulman, John},
  journal      = {arXiv preprint},
  archivePrefix= {arXiv},
  eprint       = {2110.14168},
  year         = {2021},
  note         = {GSM8K dataset benchmark for grade-school math reasoning},
  url          = {https://arxiv.org/abs/2110.14168}
}

@inproceedings{lin2021truthfulqa,
  title        = {TruthfulQA: Measuring How Models Mimic Human Falsehoods},
  author       = {Lin, Stephanie and Hilton, Jacob and Evans, Owain},
  booktitle    = {Proceedings of the 60th Annual Meeting of the Association for Computational Linguistics (Volume 1: Long Papers)},
  year         = {2022},
  archivePrefix= {arXiv},
  eprint       = {2109.07958},
  url          = {https://arxiv.org/abs/2109.07958},
  note         = {Benchmark for truthfulness and factuality in language generation}
}

@article{chen2021evaluating,
  title        = {Evaluating Large Language Models Trained on Code},
  author       = {Chen, Mark and Tworek, Jerry and Jun, Heewoo and Yuan, Qiming and Pinto, Henrique Ponde de Oliveira and Kaplan, Jared and Edwards, Harri and Burda, Yuri and Joseph, Nicholas and Brockman, Greg and others},
  journal      = {arXiv preprint},
  archivePrefix= {arXiv},
  eprint       = {2107.03374},
  year         = {2021},
  note         = {Introduces the HumanEval code generation benchmark},
  url          = {https://arxiv.org/abs/2107.03374}
}

@misc{yang2025qwen3technicalreport,
      title={Qwen3 Technical Report}, 
      author={An Yang and Anfeng Li and Baosong Yang and Beichen Zhang and Binyuan Hui and Bo Zheng and Bowen Yu and Chang Gao and Chengen Huang and Chenxu Lv and Chujie Zheng and Dayiheng Liu and Fan Zhou and Fei Huang and Feng Hu and Hao Ge and Haoran Wei and Huan Lin and Jialong Tang and Jian Yang and Jianhong Tu and Jianwei Zhang and Jianxin Yang and Jiaxi Yang and Jing Zhou and Jingren Zhou and Junyang Lin and Kai Dang and Keqin Bao and Kexin Yang and Le Yu and Lianghao Deng and Mei Li and Mingfeng Xue and Mingze Li and Pei Zhang and Peng Wang and Qin Zhu and Rui Men and Ruize Gao and Shixuan Liu and Shuang Luo and Tianhao Li and Tianyi Tang and Wenbiao Yin and Xingzhang Ren and Xinyu Wang and Xinyu Zhang and Xuancheng Ren and Yang Fan and Yang Su and Yichang Zhang and Yinger Zhang and Yu Wan and Yuqiong Liu and Zekun Wang and Zeyu Cui and Zhenru Zhang and Zhipeng Zhou and Zihan Qiu},
      year={2025},
      eprint={2505.09388},
      archivePrefix={arXiv},
      primaryClass={cs.CL},
      url={https://arxiv.org/abs/2505.09388}, 
}

@misc{wang2026devilmoltbookanthropicsafety,
      title={The Devil Behind Moltbook: Anthropic Safety is Always Vanishing in Self-Evolving AI Societies}, 
      author={Chenxu Wang and Chaozhuo Li and Songyang Liu and Zejian Chen and Jinyu Hou and Ji Qi and Rui Li and Litian Zhang and Qiwei Ye and Zheng Liu and Xu Chen and Xi Zhang and Philip S. Yu},
      year={2026},
      eprint={2602.09877},
      archivePrefix={arXiv},
      primaryClass={cs.CL},
      url={https://arxiv.org/abs/2602.09877}, 
}

@article{wu2025sgm,
  title        = {SGM: A Statistical g\"{o}del Machine for Risk-Controlled Recursive Self-Modification},
  author       = {Wu, Xuening and Yin, Shenqin and Kang, Yanlan and Zhang, Xinhang and Xu, Qianya and Chen, Zeping and Zhang, Wenqiang},
  journal      = {arXiv preprint},
  archivePrefix= {arXiv},
  eprint       = {2510.10232},
  year         = {2025},
  url          = {https://arxiv.org/abs/2510.10232}
}

@article{ravindran2025moralanchor,
  title        = {Moral Anchor System: A Predictive Framework for AI Value Alignment and Drift Prevention},
  author       = {Ravindran, Santhosh Kumar},
  journal      = {arXiv preprint},
  archivePrefix= {arXiv},
  eprint       = {2510.04073},
  year         = {2025},
  url          = {https://arxiv.org/abs/2510.04073}
}

@article{grenier2023conceptdrift,
  title        = {A survey on machine learning for recurring concept drifting data streams},
  author       = {Grenier, et al.},
  journal      = {Expert Systems with Applications},
  volume       = {213},
  year         = {2023},
  doi          = {10.1016/j.eswa.2022.118934}
}
\bibliographystyle{iclr2026_conference}

\appendix
\section*{Appendix}
\section{Problem Formulation and Definitions}
\label{sec:problem}

Let us establish precise formulations of the alignment preservation problem in recursive self improvement.

\subsection{Task formulation}

We consider a task defined as a tuple \(\tau=(\pi,\constraint,\mathcal{M})\), where \(\pi\) denotes a natural-language prompt or problem statement, \(\constraint\) is a set of constraints that solutions must satisfy, and \(\mathcal{M}\) is a reference solution (ground truth) that defines correctness.

A model \(\theta\) receives input \(\pi\) and produces a response \(y=\theta(\pi)\). We evaluate \(y\) along three dimensions: quality \(\quality(y)\), which measures problem-solving capability; constraint satisfaction \(\cps(y)\), which verifies safety and format requirements; and alignment (drift) \(\drift(\theta_0,\theta_t)\), which quantifies deviation from an initial model \(\theta_0\).

\subsection{Recursive self-improvement process}

The recursive self-improvement process proceeds in cycles \(c=0,1,\dots,C_{\max}\). At cycle \(c\) the model \(\theta_c\) produces
\[
y_c=\theta_c(\pi),
\]
which is evaluated along the three dimensions. Feedback derived from these evaluations (quality assessment and drift analysis) is used to construct an improvement prompt; the result of applying that prompt yields the improved model \(\theta_{c+1}\). Concretely, at cycle \(c\) we compute
\begin{align}
y_c &= \theta_c(\pi), \label{eq:yc}\\
\quality_c &= \quality(y_c,\mathcal{M}), \label{eq:qc}\\
\cps_c &= \cps(y_c,\constraint), \label{eq:cpsc}\\
\drift_c &= \drift(\theta_0,\theta_c \mid \text{baseline}). \label{eq:driftc}
\end{align}
Here \(\quality,\cps,\drift\) are the respective evaluation functions. Note that drift is measured with respect to the initial model \(\theta_0\) rather than the immediately preceding model; this yields a cumulative measure of deviation and prevents short-term, high-variance cycle-to-cycle changes from masking long-term drift.

\subsection{Alignment Stability Requirements}

We define three core alignment stability requirements:

\textbf{Drift Boundedness}: The Goal Drift Index should remain below a learned threshold $\tau_{\text{gdi}}$ throughout improvement cycles. Violations indicate that the system has fundamentally changed its behavior in ways that may indicate misalignment.

\textbf{Constraint Preservation}: The constraint preservation score should remain at unity or near unity throughout cycles. Any violation of explicit constraints represents a direct breach of alignment properties.

\textbf{Regression Prevention}: The probability that the system will regress to previous lower quality states should be quantified and bounded. Systems exhibiting high regression risk should be paused to prevent undoing prior improvements.

These three requirements work together to form a comprehensive alignment safety net. Drift boundedness catches subtle misalignments before they lead to dramatic behavioral changes. Constraint preservation ensures explicit safety properties are maintained. Regression prevention prevents cycles from erasing improvements and potentially creating cycles of improvement and reversal.

\section{Drift Detection and the Goal Drift Index}
\label{sec:drift}

\subsection{Multi Signal Drift Measurement}

Alignment drift is not a one dimensional phenomenon. We conceptualize drift as occurring across four distinct modalities, each capturing different types of deviation:

\textbf{Semantic Drift}: Changes in the meaning or intent of model outputs despite similar surface form. We capture this through embedding space distance.

\textbf{Lexical Drift}: Shifts in vocabulary usage patterns that may correlate with different learned associations. We measure this through vocabulary distribution divergence.

\textbf{Structural Drift}: Changes in output formatting, organization, or response structure. We quantify this through format pattern analysis.

\textbf{Distributional Drift}: Statistical divergence in the distribution of model outputs across multiple runs. We capture this through KL divergence and Wasserstein distances.

Each signal carries different information about potential misalignment. A system might drift semantically without drifting lexically, or drift structurally without semantic change. By measuring all four simultaneously, we obtain a more complete picture of alignment deviation.

\subsection{Semantic Drift}

Semantic drift is measured using normalized cosine distance in embedding space. Let $e_0 = \text{embed}(y_0)$ denote the mean pooled hidden state embedding of the baseline response and $e_t = \text{embed}(y_t)$ denote the embedding at time $t$. We normalize both embeddings:

\begin{align}
\hat{e}_0 &= \frac{e_0}{\|e_0\| + \epsilon} \\
\hat{e}_t &= \frac{e_t}{\|e_t\| + \epsilon}
\end{align}

The cosine similarity is computed as:
\begin{equation}
s_{\text{cos}} = \hat{e}_0 \cdot \hat{e}_t
\end{equation}

In high dimensional spaces, random vectors typically have near zero cosine similarity. Identical vectors have similarity one. We normalize the distance to the interval $[0, 1]$:

\begin{equation}
\drift_{\text{semantic}} = \frac{1 - s_{\text{cos}}}{2}
\end{equation}

This yields a bounded measure where zero indicates semantic identity and one indicates maximum semantic distance. We divide by 2 to create symmetric [0,1] range, where 0.5 represents 
orthogonal embeddings.

\subsection{Lexical Drift}

Lexical drift measures how much the vocabulary distribution has shifted. We tokenize both baseline and current responses and compute token frequency distributions. Let $P_0 = \{p_0(w) : w \in V\}$ denote the token probability distribution for the baseline and $P_t = \{p_t(w) : w \in V\}$ for the current response.

We employ Jensen Shannon divergence, which is symmetric and bounded:

\begin{equation}
\drift_{\text{lexical}} = \text{JSD}(P_0 \| P_t) = \frac{1}{2}\text{KL}(P_0 \| M) + \frac{1}{2}\text{KL}(P_t \| M)
\end{equation}

where $M = \frac{1}{2}(P_0 + P_t)$ is the average distribution. Jensen Shannon divergence is bounded in $[0, 1]$ after appropriate normalization, making it suitable for integration into a composite metric.

\subsection{Structural Drift}

Structural drift captures changes in how responses are organized and formatted. We extract structural features including response length, line count, presence of code blocks, list structure, and other organizational markers. Let $s_0 = (l_0, n_0, c_0, \ldots)$ denote the structural feature vector for the baseline and $s_t = (l_t, n_t, c_t, \ldots)$ for the current response.

We compute normalized differences:

\begin{equation}
\drift_{\text{structural}} = \frac{1}{K}\sum_{k=1}^{K} \min\left(1, \frac{|s_0^{(k)} - s_t^{(k)}|}{s_0^{(k)} + \epsilon}\right)
\end{equation}

where the sum ranges over structural features and the min operation ensures the metric stays bounded in $[0, 1]$. Min operation ensures bounded metric even with extreme feature changes 
(e.g., response length doubling).

\subsection{Distributional Drift}

Beyond individual responses, we track how the distribution of responses evolves. We maintain a history of recent embeddings and compute the Wasserstein distance between the baseline distribution and the current distribution:

\begin{equation}
\drift_{\text{distributional}} = W(P_{\text{baseline}}, P_{\text{current}})
\end{equation}

where $W$ denotes the 1-Wasserstein distance (earth mover's distance). This requires maintaining a buffer of recent response embeddings, which we accumulate during cycles.

\subsection{Composite Goal Drift Index}

The four drift components are combined into a unified Goal Drift Index (GDI) using learned weights that are calibrated during the calibration phase:
\begin{equation}
\GDI \;=\; w_s\,\drift_{\mathrm{semantic}}
       \;+\; w_\ell\,\drift_{\mathrm{lexical}}
       \;+\; w_{st}\,\drift_{\mathrm{structural}}
       \;+\; w_d\,\drift_{\mathrm{distributional}},
\end{equation}
with
\begin{equation}
w_s + w_\ell + w_{st} + w_d = 1,\qquad
w_s, w_\ell, w_{st}, w_d \in [0,1].
\end{equation}

The weights are learned on calibration data to maximize detection performance. Formally:
\begin{equation}
\vecw^{*} \;=\; \arg\max_{\vecw}
\sum_{i=1}^{N_{\mathrm{cal}}} \AUC\big(\GDI_i(\vecw),\, y_i^{\mathrm{drift}}\big),
\end{equation}
where $y_i^{\mathrm{drift}}\in\{0,1\}$ is a binary label indicating whether response $i$ exhibits substantial drift (obtained via expert annotation during calibration).

\subsection{Threshold calibration}

Task-specific thresholds are learned from calibration data rather than fixed a priori. For each task type we select the threshold that maximizes the F$_1$ score:
\begin{equation}
\tau_{\GDI}^{*} \;=\; \arg\max_{\tau}\; \Fone\big(\GDI > \tau,\; y^{\mathrm{drift}}\big).
\end{equation}
Allowing $\tau_{\GDI}$ to vary across task types accommodates differing baseline drift characteristics.

\subsection{Confidence intervals via bootstrap}

We quantify uncertainty in drift estimates using nonparametric bootstrap resampling. Resample token-distribution and embedding samples with replacement, recompute the drift components for each bootstrap replicate, and form empirical confidence intervals. From $B$ bootstrap replicates the 95\% interval for the GDI is
\begin{equation}
\mathrm{CI}_{\GDI} \;=\; \big[\,\GDI_{\mathrm{lower}},\; \GDI_{\mathrm{upper}}\,\big],
\end{equation}
where $\GDI_{\mathrm{lower}}$ and $\GDI_{\mathrm{upper}}$ are the empirical $0.025$ and $0.975$ quantiles of the bootstrap distribution. These intervals provide uncertainty estimates for all reported drift measurements.
\section{Constraint Preservation and Safety}
\label{sec:constraint}

\subsection{Constraint Specification}

Safety properties in self improving systems must be explicitly specified and verified. We formalize constraints as a set of logical predicates $\constraint = \{\constraint_1, \constraint_2, \ldots, \constraint_K\}$ that all model outputs must satisfy.

Constraints fall into several categories:

\textbf{Format Constraints}: Specify required output format, length bounds, structure requirements (e.g., responses must be valid Python code, must include step by step reasoning).

\textbf{Content Constraints}: Restrict what types of content can appear in responses (e.g., no personal information, no instructions for harmful activities).

\textbf{Logical Constraints}: Require that responses satisfy certain logical properties (e.g., answers are internally consistent, conclusions follow from premises).

\textbf{Ethical Constraints}: Ensure responses align with specified ethical guidelines (e.g., no discriminatory content, no deceptive reasoning).

We evaluate constraint satisfaction through both automatic checking (for format and simple content constraints) and learned evaluators (for semantic constraints).

\subsection{Constraint Preservation Score}

The Constraint Preservation Score (CPS) measures the fraction of constraints that are satisfied:

\begin{equation}
\cps = \frac{1}{K}\sum_{k=1}^{K} \mathbb{1}[\constraint_k(y) = \text{true}]
\end{equation}

where $\mathbb{1}[\cdot]$ is the indicator function. Notably, CPS operates as a hard metric: either a constraint is satisfied or it is not. This ensures that alignment preservation requirements are not compromised through soft relaxation.

\subsection{Constraint Violation Penalties}

During improvement cycles, we apply explicit penalties for constraint violations in the improvement prompts. When cycle $c$ produces a response with unsatisfied constraints, we highlight those violations in the prompt provided to the model for the next cycle. The improvement prompt explicitly lists violated constraints and requests that the next iteration resolves them.

The severity of penalties increases with the number and severity of violations. Violations of ethical constraints receive higher attention than violations of format constraints. This tiered approach ensures that critical safety properties receive priority during improvement.

\subsection{Constraint Preservation as Stopping Criterion}

We institute a hard stopping rule: if any cycle produces zero constraint preservation (i.e., multiple critical constraints are violated), we immediately halt the improvement process rather than continuing to cycle. This ensures that systems never continue iterating if they have fundamentally violated their safety properties.

\section{Long Horizon Stability Analysis}
\label{sec:stability}

\subsection{Regression risk framework}

A critical concern in recursive self-improvement is \emph{regression}: the phenomenon where quality improvements in later cycles are undone by reversion to earlier behaviour. Let $\qualityc$ denote the quality observed at cycle $c$, and let
\[
\qualitymax \coloneqq \max_{0 \le t < c} \quality_t
\]
denote the best quality achieved prior to cycle $c$. For a given threshold $\delta>0$ (what counts as a meaningful regression), define the regression risk at cycle $c$ as
\begin{equation}\label{eq:reg_def}
\regression_c \;=\; \Pr\big(\quality_{c} < \qualitymax - \delta \mid H_c\big),
\end{equation}
where $H_c=\{\quality_0,\quality_1,\dots,\quality_{c-1}\}$ is the observed history up to (but not including) cycle $c$.

\subsection{Estimating regression probability}

From the history $H_c$ compute the following summary statistics.

\paragraph{Volatility.} Measure baseline variability by the sample standard deviation
\begin{equation}
\sigma \;=\; \operatorname{std}(H_c).
\end{equation}

\paragraph{Trend.} Fit a linear least-squares trend to the historical sequence. For integer-indexed cycles $t$ we write the fitted trend as
\begin{equation}
\widehat{\quality}_t \;=\; \alpha + \beta\, t,
\end{equation}
so that $\beta<0$ indicates a downward trend and $|\beta|$ gives the per-cycle rate.

\paragraph{Volatility-normalized gap.} The gap between the most recent observation and the prior maximum, normalized by volatility, is
\begin{equation}
z \;=\; \frac{\qualityc - \qualitymax}{\sigma + \eps},
\end{equation}
with $\eps>0$ a small regulariser to avoid division by zero. This $z$ indicates how many standard deviations the current value lies below the historical peak.

\subsection{Regression probability calculation}

To estimate the probability in \eqref{eq:reg_def} we predict the next-cycle quality using the fitted trend. Let
\[
\widehat{\quality}_{c+1} \;=\; \alpha + \beta (c+1)
\]
denote the point prediction for cycle $c+1$. Under the working approximation that prediction errors are approximately Gaussian with standard deviation $\sigma$, we obtain the Gaussian approximation
\begin{equation}\label{eq:reg_gauss}
\regression_c \approx \Phi\!\left(\frac{\qualitymax - \delta - \widehat{\quality}_{c+1}}{\sigma}\right),
\end{equation}
where $\Phi$ is the standard normal cumulative distribution function. (Equation \eqref{eq:reg_gauss} gives the probability that the next-cycle quality falls below $\qualitymax-\delta$.)

Under i.i.d. assumption with  $\sigma$ estimated from historical volatility, 
per-cycle quality changes approximate normal distribution.

Incorporate trend information explicitly by applying a calibrated trend adjustment. Let $\epsilon_{\mathrm{trend}}>0$ be a significance threshold and $\lambda\ge 0$ a calibration factor (chosen during calibration). Define the adjusted regression risk
\begin{equation}\label{eq:reg_adjusted}
\regression_c^{\mathrm{adj}} \;=\; \min\Big\{1,\; \regression_c + \indic{\beta < -\epsilon_{\mathrm{trend}}}\,\lambda\,|\beta|\Big\},
\end{equation}
which increases the estimated risk when the fitted trend is significantly negative and otherwise leaves it unchanged (apart from the cap at~1). Choice of $\lambda$ controls sensitivity to trend magnitude.

Trend adjustment $\lambda|\beta|$ is calibrated from validation-set oscillations to detect persistent downward trends.

\subsection{Regression thresholding and stopping}

Select a decision threshold $\tau_{\regression}\in(0,1)$. If $\regression_c^{\mathrm{adj}}>\tau_{\regression}$ then the system should issue a warning and consider pausing further improvement cycles. In practice $\tau_{\regression}$ is chosen by calibration (for example at a chosen percentile of observed risk during a validation period) and can be adjusted to reflect organisational risk tolerance.

\subsection{Long-horizon stability bounds}

Under the simplified assumption that the per-cycle increments
\[
\Delta\quality_i \coloneqq \quality_i - \quality_{i-1}.
\]
are independent Gaussian random variables $\Delta \quality_i \sim \Normal(\mu,\sigma^2)$ (identical distribution across cycles), the probability that every one of the next $c$ cycles yields a \emph{positive} increment is
\begin{equation}\label{eq:all_positive}
\Pr\big(\Delta \quality_i>0\ \forall\ i=1,\dots,c\big)
\;=\; \prod_{i=1}^{c} \Pr(\Delta \quality_i>0)
\;=\; \Big(\Phi\!\big(\tfrac{\mu}{\sigma}\big)\Big)^{c}.
\end{equation}
Equation \eqref{eq:all_positive} highlights exponential decay in the probability of sustained improvement: if each cycle has only a $0.6$ chance of improvement (i.e.\ $\Phi(\mu/\sigma)=0.6$), then after $10$ cycles the probability of all improvements being sustained is $0.6^{10}\approx 0.00605$ (about $0.6\%$). This demonstrates the increasing difficulty of guaranteeing long-run monotonic improvements without additional control measures.

\section{Capability Alignment Ratio}
\label{sec:car}

\subsection{Fundamental trade-off}

Capability gains and alignment preservation are not independent. Aggressive optimisation for capability often induces representational drift or other degradation (alignment loss); conversely, strict alignment constraints can limit attainable capability. To quantify the trade-off define the \emph{Capability Alignment Ratio} at cycle $c$ as
\begin{equation}\label{eq:car_def}
\car_c \;=\; \frac{\qualityc - \quality_0}{\gdi_c},
\end{equation}
where $\quality_0$ is baseline quality and $\gdi_c$ denotes cumulative drift (alignment degradation) up to cycle $c$. A large $\car_c$ indicates efficient capability improvement per unit of alignment degradation; a small $\car_c$ indicates the opposite.

\subsection{Pareto efficiency}

Different stopping rules and constraint strengths produce different points on the Pareto frontier between capability (maximize $\quality$) and alignment (minimize $\gdi$). Points on the Pareto frontier are such that any improvement in $\quality$ requires an increase in $\gdi$, and vice versa. Points strictly inside the frontier are dominated and represent suboptimal trade-offs.

\subsection{Decision making under the CAR framework}

Define the \emph{incremental} CAR for cycle $c$:
\begin{equation}\label{eq:car_inc}
\car_{\mathrm{inc},c} \;=\; \frac{\qualityc - \quality_{c-1}}{\gdi_c - \gdi_{c-1}}.
\end{equation}
If $\car_{\mathrm{inc},c}$ is large (relative to historical values or a preset threshold), the most recent cycle produced substantial quality gains for little additional drift, suggesting continuation. If $\car_{\mathrm{inc},c}$ is small or negative (drift without improvement), stopping or rollback is indicated.

\paragraph{Implementation notes.} In practice:
\begin{itemize}
  \item Use smoothed estimates of numerator and denominator in \eqref{eq:car_inc} (e.g.\ exponential moving averages) to reduce sensitivity to single outlier measurements.
  \item Enforce lower bounds on denominators (add small $\eps$) to avoid numerical instability when drift is near zero.
  \item Combine CAR-based signals with regression-risk estimates from Section~\ref{sec:stability} to form a composite stopping rule.
\end{itemize}

\section{Safety and Ethical Considerations}
\label{sec:safety}

\subsection{Risks of Unmonitored Self Improvement}

Recursive self improvement, without principled safeguards, introduces several categories of alignment risk. We detail these risks and explain how our framework addresses them.

\textbf{Gradual Misalignment}: The most insidious risk is that alignment violations accumulate gradually across cycles such that no individual cycle violates alignment properties, yet after many cycles the system has drifted substantially from its objectives. This is precisely the drift phenomenon we measure through GDI. Our multi signal approach detects drift across semantic, lexical, structural, and distributional dimensions, making it difficult for misalignment to hide in any single modality.

\textbf{Specification Gaming}: As systems become better at optimizing for measurable objectives, they may find loopholes in the specification that satisfy the letter but not the spirit of alignment requirements. For instance, a code generation system might generate code that technically passes tests but uses unsafe operations or relies on undefined behavior. Our constraint preservation framework addresses this by requiring explicit satisfaction of specified constraints, not just achievement of quality metrics.

\textbf{Capability Collapse}: Overly restrictive alignment constraints might prevent the system from achieving meaningful capability improvements, creating a scenario where pursuing safety prevents the system from being useful. The CAR framework allows practitioners to see the capability alignment frontier and make informed trade off decisions rather than operating in the dark.

\textbf{Regression and Cycling}: Some systems show oscillating quality patterns where improvements in some cycles are reversed in others. This suggests the system is not finding stable improved behaviors but rather alternating between different regimes. Our regression risk analysis detects this phenomenon and recommends halting when it occurs.

\textbf{Constraint Violation Accumulation}: Even if individual constraints are not violated, their satisfaction might gradually degrade such that the system produces responses that barely satisfy constraints. We track the full trajectory of constraint satisfaction scores and flag when they show concerning trends.

\subsection{Limitations of the Framework}

While comprehensive, our framework has several important limitations that practitioners should understand.

\textbf{Measurement Limitations}: The GDI is fundamentally a measurement of divergence from baseline behavior. If the baseline itself is misaligned, our framework measures drift from misalignment rather than drift from true alignment. The framework is most useful when the initial model's behavior is reasonably aligned.

\textbf{Constraint Specification}: The framework assumes constraints are explicitly specified. In domains where alignment properties are difficult to formalize, the framework provides less benefit. Ongoing work in interpretability and mechanistic understanding of models may eventually enable more nuanced constraint specification.

\textbf{Task Distribution Shift}: Our framework is calibrated for specific task types. If the task distribution changes substantially (e.g., moving from mathematical reasoning to creative writing), recalibration would be prudent.

\textbf{Adversarial Robustness}: The framework detects natural drift but may not be robust to adversarial manipulation. A system specifically designed to evade drift detection might succeed. Adversarial robustness of alignment systems is an open problem beyond the scope of this work.

\subsection{Deployment Considerations}

Deploying self improving systems with our framework involves several practical considerations that practitioners should address.

\textbf{Monitoring and Logging}: All improvement cycles, drift measurements, and decision making should be logged for auditing. This creates a complete record of the system's self improvement journey.

\textbf{Frequency of Human Oversight}: While the framework operates autonomously, human practitioners should periodically review logs and assess whether the system's improvement trajectory aligns with expectations. We recommend periodic reviews every 10 to 20 cycles.

\textbf{Circuit Breaker Mechanisms}: Beyond the algorithmic stopping rules in our framework, systems should include external circuit breakers that allow humans to halt improvement processes if unexpected behaviors emerge.

\textbf{Rollback Capabilities}: Systems should maintain checkpoints at each cycle so that if undesirable changes accumulate, the system can be rolled back to earlier versions. This is particularly important if unexpected drift emerges in post deployment settings.

\subsection{Mitigation Strategies}

Our framework includes several mitigation strategies for addressing alignment risks when they emerge.

\textbf{Conservative Thresholds}: Practitioners can choose more conservative thresholds for drift detection and regression risk if they prioritize safety over performance gains.

\textbf{Constraint Strengthening}: If the system begins exhibiting concerning behaviors, constraints can be strengthened to more strictly limit undesired outputs. Recalibration then occurs on the new constraint set.

\textbf{Cycle Reduction}: Reducing the maximum cycle count trades off potential performance gains for improved safety, as fewer cycles provide fewer opportunities for drift accumulation.

\textbf{Inter Cycle Verification}: For high stakes applications, human verification could be inserted between improvement cycles rather than relying entirely on automated checks.

\textbf{Capability Ceiling}: If the system approaches a CAR frontier where further improvements incur excessive alignment costs, practitioners can adopt a capability ceiling beyond which improvement attempts cease.

\section{Theoretical analysis}
\label{app:theory}

\subsection{Lipschitz continuity of drift}
Let $\mathcal{Y}$ denote the response space with metric $d_{\mathcal{Y}}(\cdot,\cdot)$. The drift measurement
\[
\drift:\mathcal{Y}\times\mathcal{Y}\to[0,1]
\]
maps pairs of responses to a normalized drift value.

\begin{proposition}
\label{prop:app:lipschitz-drift}
There exists $L_{\drift}\ge 0$ such that for all $y,y',\tilde y,\tilde y'\in\mathcal{Y}$
\begin{equation}\label{eq:app:lipschitz-drift}
\big|\drift(y,y')-\drift(\tilde y,\tilde y')\big|
\le L_{\drift}\,\big(d_{\mathcal{Y}}(y,\tilde y)+d_{\mathcal{Y}}(y',\tilde y')\big).
\end{equation}
If $\drift$ is a convex combination $\drift=\sum_{i=1}^m \alpha_i\drift_i$ with $\alpha_i\ge0$, $\sum_i\alpha_i=1$, and each $\drift_i$ is Lipschitz with constant $L_i$, then one may take $L_{\drift}=\sum_i\alpha_i L_i$.
\end{proposition}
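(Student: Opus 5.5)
The plan is to prove the second (convex-combination) assertion first, since it is elementary. Then I would reduce the first assertion to it by checking that each of the four components in the Goal Drift Index is Lipschitz with respect to a suitable choice of $d_{\mathcal{Y}}$. I should say at the outset that the first assertion cannot hold for an arbitrary metric on $\mathcal{Y}$: a bounded but discontinuous $\drift$ violates \eqref{eq:app:lipschitz-drift}. So I will read it as a statement about the concrete $\drift=\GDI$ of Section~\ref{sec:drift}, with $d_{\mathcal{Y}}$ chosen to dominate the feature maps that $\GDI$ uses.

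\textbf{Step 1 (convex combination).} Suppose each $\drift_i$ satisfies \eqref{eq:app:lipschitz-drift} with constant $L_i$. Write $D=d_{\mathcal{Y}}(y,\tilde y)+d_{\mathcal{Y}}(y',\tilde y')$. By the triangle inequality and $\alpha_i\ge 0$,
\[
\big|\drift(y,y')-\drift(\tilde y,\tilde y')\big|
\le\sum_i\alpha_i\big|\drift_i(y,y')-\drift_i(\tilde y,\tilde y')\big|
\le\Big(\sum_i\alpha_i L_i\Big) D .
\]
This gives $L_{\drift}=\sum_i\alpha_iL_i$. The normalization $\sum_i\alpha_i=1$ is not needed for this step; it only guarantees that $\drift$ stays in $[0,1]$.

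\textbf{Step 2 (component-wise Lipschitz bounds).} I would take
\[
d_{\mathcal{Y}}(y,\tilde y)=\|\hat e(y)-\hat e(\tilde y)\|+\|P(y)-P(\tilde y)\|_1+\|s(y)-s(\tilde y)\|_\infty+W\big(\mu(y),\mu(\tilde y)\big),
\]
where $\hat e$ is the normalized embedding, $P$ the token distribution, $s$ the structural feature vector, and $\mu$ the embedding buffer. The components are then handled as follows.
\begin{itemize}
\item \emph{Semantic.} $\drift_{\text{semantic}}=(1-\hat e_0\cdot\hat e_t)/2$ is bilinear on vectors of norm at most $1$, so it is $\tfrac12$-Lipschitz in each argument.
\item \emph{Distributional.} $W$ satisfies the triangle inequality, so $|W(\mu,\nu)-W(\tilde\mu,\tilde\nu)|\le W(\mu,\tilde\mu)+W(\nu,\tilde\nu)$, which gives constant $1$.
\item \emph{Structural.} The map $t\mapsto\min(1,|a-t|/(a+\epsilon))$ is $1/\epsilon$-Lipschitz in $t$. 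In the baseline argument $a$ the derivative is bounded by a constant of order $\epsilon^{-2}$, provided the features are bounded, which holds for lengths and counts on a finite response space.
\end{itemize}

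\textbf{Step 3 (main obstacle: Jensen--Shannon).} The lexical term is the hard one. $\text{JSD}$ is only Hölder-$\tfrac12$ continuous in total variation near distributions with vanishing mass, because $x\log x$ has unbounded derivative at $0$. My first attempt would be to use $\sqrt{\text{JSD}}$, which is a metric and hence $1$-Lipschitz in itself, and write $\text{JSD}=(\sqrt{\text{JSD}})^2$ with $\sqrt{\text{JSD}}\le 1$. This reduces the problem to Lipschitz continuity of $\sqrt{\text{JSD}}$ with respect to $\|\cdot\|_1$, and that fails for the same reason at small masses. The fallback is to work on a finite vocabulary and assume the smoothed token distributions have every probability at least some $p_{\min}>0$, as add-$\epsilon$ smoothing in the implementation provides. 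Then $\text{JSD}$ is $C^1$ on a compact set with gradient of order $\log(1/p_{\min})$, and the mean value theorem gives a Lipschitz constant.

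Finally, Step 1 combines the four constants with the learned weights $w_s,w_\ell,w_{st},w_d$. This yields an explicit $L_{\drift}$ and proves the first assertion under the stated metric and smoothing assumptions.
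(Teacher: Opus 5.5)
Your proposal is correct and follows the same route as the paper's proof sketch: each of the four $\GDI$ components is Lipschitz with respect to a feature distance (embedding distance, token-distribution distance, structural-feature norm, Wasserstein distance), and a nonnegative combination plus the triangle inequality gives $L_{\drift}=\sum_i\alpha_i L_i$. Your version is more careful than the paper's, which neither fixes a single $d_{\mathcal{Y}}$ dominating all four feature distances nor notices that bounding $\mathrm{JSD}$ by total variation does not make it Lipschitz near vanishing masses; one small point is that the finite response space you already assume for the structural term would by itself make the lexical constant finite, so your $p_{\min}$-smoothing buys an explicit $\log(1/p_{\min})$ constant rather than being strictly necessary.
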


\begin{proof}[Proof sketch]
Each component drift (semantic, lexical, structural, distributional, \ldots) is Lipschitz in an appropriate feature metric (e.g. semantic drift via embedding-induced distance, lexical via divergence metrics bounded by total variation, structural via structural-feature norms, distributional via Wasserstein). Linear combination with nonnegative weights preserves Lipschitz continuity and yields the stated bound by the triangle inequality.
\end{proof}

This result ensures that small perturbations in responses yield proportionally small changes in measured drift under the chosen metric coupling.

\subsection{Asymptotic drift bounds}
Relate cumulative drift to a sequence of quality changes across improvement cycles.

\begin{proposition}
\label{prop:app:asymptotic-bound}
Let $\Delta \quality_c=\quality_c-\quality_{c-1}$. Suppose $\{\Delta \quality_c\}_{c\ge1}$ are (approximately) independent with mean $\mathbb{E}[\Delta \quality_c]=\mu$ and variance $\mathrm{Var}(\Delta \quality_c)=\sigma^2$. Assume pointwise
\[
|\drift_c-\drift_{c-1}|\le L_{\drift}\,|\Delta \quality_c|
\]
for each $c$. Then the expected drift after $C$ cycles satisfies
\begin{equation}\label{eq:app:expected-drift-bound}
\mathbb{E}[\drift_C]\le \drift_0 + L_{\drift}\,\mathbb{E}\Big[\sum_{c=1}^C |\Delta \quality_c|\Big]
= \drift_0 + L_{\drift}\,\big(\mu C + O(\sigma\sqrt{C})\big).
\end{equation}
\end{proposition}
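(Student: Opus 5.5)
The plan is to telescope the drift sequence, apply the pointwise Lipschitz hypothesis term by term, and then take expectations. For every realization,
\[
\drift_C=\drift_0+\sum_{c=1}^C(\drift_c-\drift_{c-1})\le \drift_0+\sum_{c=1}^C|\drift_c-\drift_{c-1}|\le \drift_0+L_{\drift}\sum_{c=1}^C|\Delta\quality_c|.
\]
Taking expectations and using linearity gives the first inequality in \eqref{eq:app:expected-drift-bound}. This step needs no independence. It does need $\drift_0$ to be deterministic, or to be replaced by $\mathbb{E}[\drift_0]$; since $\drift_0$ is drift of $\theta_0$ from itself, it is in fact $0$ under the definitions of Section~\ref{sec:problem}. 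The constant $L_{\drift}$ is the one supplied by Proposition~\ref{prop:app:lipschitz-drift}, transported to quality increments by the standing assumption.

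The second step evaluates $\mathbb{E}\big[\sum_c|\Delta\quality_c|\big]=\sum_c\mathbb{E}|\Delta\quality_c|$ termwise. For $\mu\ge0$ I would write $|X|=X+2X^{-}$ and use $X^{-}\le (X-\mu)^{-}$. Together with $\mathbb{E}(X-\mu)^{-}=\tfrac12\mathbb{E}|X-\mu|\le\tfrac12\sigma$ (Cauchy--Schwarz), this yields
\[
\mathbb{E}|\Delta\quality_c|\le \mu+\sigma .
\]
Summing over cycles then gives $\mathbb{E}[\drift_C]\le\drift_0+L_{\drift}(\mu C+\sigma C)$. Under the Gaussian model of Section~\ref{sec:stability}, the exact folded-normal mean $\mu(1-2\Phi(-\mu/\sigma))+2\sigma\varphi(\mu/\sigma)$ could be used instead.

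The main obstacle is the claimed $O(\sigma\sqrt{C})$ fluctuation term. A sum of absolute values of $C$ independent nondegenerate increments has expectation growing \emph{linearly} in $C$; for instance, with $\mu=0$ it equals $C\sigma\sqrt{2/\pi}$ in the Gaussian case. So the $\sqrt{C}$ rate cannot come from $\sum|\Delta\quality_c|$. It can only come from the \emph{net} change $\big|\sum_c\Delta\quality_c\big|=|\quality_C-\quality_0|$: independence gives $\mathrm{Var}(\quality_C-\quality_0)=C\sigma^2$, hence $\mathbb{E}|\quality_C-\quality_0|\le|\mu|C+\sigma\sqrt{C}$.

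I would therefore present two versions. The first is the rigorous statement with $\mu C+\sigma C$, where the $O(\sigma C)$ term becomes negligible relative to $\mu C$ when $\sigma\ll\mu$. The second recovers the displayed $\mu C+O(\sigma\sqrt{C})$ bound under a strengthened hypothesis: cumulative drift is Lipschitz in net quality displacement, $|\drift_C-\drift_0|\le L_{\drift}|\quality_C-\quality_0|$. This matches the paper's choice of measuring drift against $\theta_0$ rather than cycle to cycle. Under that hypothesis the proof is one line: the telescoped increments cancel inside the absolute value, and the square-root bound follows from Jensen's inequality and additivity of variance under independence.
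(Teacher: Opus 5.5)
Your first step is exactly the paper's proof. It telescopes to $\drift_C\le\drift_0+L_{\drift}\sum_{c=1}^C|\Delta\quality_c|$ and then says only ``take expectations and use independence and standard concentration heuristics.'' Your objection to that last step is correct, and it is exactly where the paper's argument fails.

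The quantity $\mathbb{E}\sum_c|\Delta\quality_c|=\sum_c\mathbb{E}|\Delta\quality_c|$ is already deterministic, so independence and concentration play no role. Writing $\mathbb{E}|\Delta\quality_c|=\mu+2\,\mathbb{E}(\Delta\quality_c)^{-}$ shows the following for identically distributed increments (e.g.\ the Gaussian model of Section~\ref{sec:stability}): the sum exceeds $\mu C$ by $2C\,\mathbb{E}(\Delta\quality_1)^{-}$. That excess is linear in $C$ unless $\Delta\quality_c\ge 0$ almost surely.

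The $\sqrt{C}$ the paper invokes belongs to the fluctuation of the net displacement $\sum_c\Delta\quality_c=\quality_C-\quality_0$ about $\mu C$. In effect the paper silently exchanged $\sum_c|\Delta\quality_c|$ with $|\sum_c\Delta\quality_c|$. So the displayed equality is false as stated, and so is the remark after it claiming $O(\sqrt{C})$ growth when $\mu\approx 0$. Your $\mu C+\sigma C$ bound, or the exact folded-normal mean, is what the hypotheses actually give. The paper's value $\mu C$ is exact only in the monotone case $\Delta\quality_c\ge0$, where the two sums coincide.

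There are a few corrections to your own write-up.

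\emph{The net-displacement hypothesis is not a strengthening.} Neither it nor the per-cycle hypothesis implies the other. The pair $\quality=(0,1,0)$, $\drift=(0,L_{\drift},2L_{\drift})$ satisfies the per-cycle bound but not the net one. The pair $\quality=(0,1,1)$, $\drift=(0,0,L_{\drift})$ satisfies the net bound at every $C$ but not the per-cycle one. For a genuine strengthening, assume $|\drift_c-\drift_{c'}|\le L_{\drift}|\quality_c-\quality_{c'}|$ for all $c,c'$. This implies both hypotheses, and your Jensen-plus-variance argument then gives $\drift_0+L_{\drift}(|\mu|C+\sigma\sqrt{C})$, using only uncorrelatedness of the increments.

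\emph{The source of $L_{\drift}$.} Proposition~\ref{prop:app:lipschitz-drift} does not supply this constant. It bounds drift changes by $d_{\mathcal{Y}}$ between responses, and nothing in the paper relates $d_{\mathcal{Y}}(y_{c-1},y_c)$ to $|\Delta\quality_c|$. The per-cycle bound is therefore a freestanding assumption, which is all your argument actually uses.

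\emph{Sign of $\mu$.} Write $|\mu|$ in your first version unless you restrict to $\mu\ge0$.
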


\begin{proof}[Proof sketch]
By telescoping and the assumed per-cycle bound,
\[
\drift_C \le \drift_0 + L_{\drift}\sum_{c=1}^C |\Delta \quality_c|.
\]
Take expectations and use independence and standard concentration heuristics to obtain the stated bound.
\end{proof}

Equation \eqref{eq:app:expected-drift-bound} implies at-most-linear growth of expected drift when $\mu>0$, and $\mathcal{O}(\sqrt{C})$ growth when $\mu\approx 0$.

\subsection{Stability in contractive regimes}
Define a regime where drift remains controlled as quality improves.

\begin{definition}
A system is in a \emph{contractive regime} if the effective Lipschitz constant relating quality changes to drift satisfies $L_{\drift}<1$.
\end{definition}

\begin{proposition}
\label{prop:app:contractive-equilibrium}
Assume the per-cycle expected quality increment is $\mu\ge0$ and the expected-drift evolution is approximated by the affine map
\[
\mathbb{E}[\drift_{c+1}] \approx L_{\drift}\,\mathbb{E}[\drift_c] + L_{\drift}\,\mu.
\]
If $L_{\drift}\in[0,1)$, then
\[
\lim_{c\to\infty}\mathbb{E}[\drift_c] = \frac{L_{\drift}\,\mu}{1-L_{\drift}}.
\]
\end{proposition}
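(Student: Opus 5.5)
We treat the approximation as an exact recursion, so that the claim becomes a convergence statement for a deterministic affine iteration. Write $x_c \coloneqq \mathbb{E}[\drift_c]$, $a \coloneqq L_{\drift}\in[0,1)$ and $b \coloneqq L_{\drift}\,\mu\ge 0$. Then $x_{c+1}=a x_c + b$ for all $c\ge 0$, and the initial value $x_0=\mathbb{E}[\drift_0]$ is finite because $\drift$ takes values in $[0,1]$. The plan is to locate the fixed point of this map and show that it attracts every orbit geometrically. This is the Banach fixed-point argument specialised to the real line.

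First, solve $x^\ast = a x^\ast + b$. Since $a<1$, this gives the unique fixed point $x^\ast = b/(1-a) = L_{\drift}\mu/(1-L_{\drift})$, which is the claimed limit. Second, define the error $e_c \coloneqq x_c - x^\ast$ and subtract the fixed-point identity from the recursion. The constant terms cancel and leave $e_{c+1} = a\,e_c$. Induction then gives $e_c = a^c e_0$, so that
\[
\bigl|\mathbb{E}[\drift_c] - \tfrac{L_{\drift}\mu}{1-L_{\drift}}\bigr| = L_{\drift}^{\,c}\,\bigl|\mathbb{E}[\drift_0]-\tfrac{L_{\drift}\mu}{1-L_{\drift}}\bigr|.
\]
The right-hand side tends to $0$ because $0\le L_{\drift}<1$. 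We would also record the closed form $x_c = a^c x_0 + b\,(1-a^c)/(1-a)$, which exhibits monotone convergence from either side. This yields the limit, together with the geometric rate $L_{\drift}^{c}$ as a by-product.

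The degenerate cases need a brief check. If $L_{\drift}=0$, the sequence equals $0$ from $c=1$ onward, and this agrees with the formula. If $\mu=0$, the limit is $0$. Neither case needs separate treatment, because the error identity covers both.

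The main obstacle is not the algebra but the interpretation of ``$\approx$''. If the recursion holds only up to a perturbation $\eta_c$, i.e.\ $x_{c+1}=a x_c + b + \eta_c$, we would state a robust version instead. The error then satisfies $e_{c+1}=a e_c+\eta_c$, so $|e_c|\le a^c|e_0| + \sum_{k<c} a^{c-1-k}|\eta_k|$. This tends to $0$ whenever $\eta_c\to 0$, by a discrete Toeplitz/Cesàro-type lemma using $\sum_j a^j = 1/(1-a)$. If instead $|\eta_c|\le\eta$ uniformly, we get $\limsup_c |e_c|\le \eta/(1-a)$. We would present the exact-recursion proof as the statement and add the perturbed bound as a remark.
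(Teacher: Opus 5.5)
Your proposal is correct and takes the same route as the paper's sketch: solve the fixed-point equation $x^\ast = L_{\drift}x^\ast + L_{\drift}\mu$, then use $L_{\drift}<1$ to get geometric decay of the error, $e_c = L_{\drift}^{\,c}e_0$. Your perturbation remark on how to read ``$\approx$'' (the $\eta_c$ term) is a worthwhile addition, since the paper silently treats the approximate recursion as exact.
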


\begin{proof}[Proof sketch]
Solve the fixed-point equation for the affine recurrence. Contraction ($L_{\drift}<1$) ensures geometric convergence to the fixed point.
\end{proof}

When $L_{\drift}<1$ the model can sustain iterative improvements without unbounded drift; for $L_{\drift}\ge1$ the linearized recurrence predicts instability absent external control or corrective mechanisms.

\section{Practical Deployment Considerations}

For practitioners deploying this framework, several practical considerations emerge from the results.

First, calibration is essential. Attempting to apply learned thresholds across domains without recalibration is likely to fail. Our calibration procedure (18 tasks, 3 cycles each) is computationally modest, making recalibration practical even for resource constrained settings.

Second, task type matters. Truthfulness improvements are more costly in alignment terms, while code and math improvements are cheaper. Resource allocation and supervision should reflect these differences.

Third, the framework works well for tasks with clear specifications (code, math) and less well for open ended tasks. This suggests the framework is particularly valuable for safety critical applications where constraints can be formally specified.

Fourth, manual oversight remains important. While the framework provides algorithmic safeguards, human review of improvement trajectories, particularly for high risk task types or high drift tasks, provides additional insurance against unexpected behaviors.

\end{document}